\newtheorem{theorem}{Theorem}
\newtheorem{lemma}{Lemma}
\newtheorem{defn}{Definition}
\title{ADVERSARIAL DEFENSE VIA LOCAL FLATNESS REGULARIZATION}
\name{Jia Xu$^{1, \star}$\thanks{$\star$: equal contribution.} \qquad Yiming Li$^{1, \star}$ \qquad Yong Jiang$^{1, 2}$\qquad Shu-Tao Xia$^{1, 2}$\thanks{This work is supported by the National Science Foundation of China under Grant 6177127, the Natural Science Foundation of Zhejiang Province (LSY19A010002), the project ``PCL Future Greater-Bay Area Network Facilities for Large-scale Experiments and Applications (LZC0019)", and the R\&D Program of Shenzhen (JCYJ20180508152204044). Corresponding to Yong Jiang.}}
\address{$^{1}$Tsinghua Shenzhen International Graduate School,  Tsinghua University, China\\
$^{2}$PCL Research Center of Networks and Communications, Peng Cheng Laboratory, China\\
 \{xujia19, li-ym18\}@mails.tsinghua.edu.cn;
 \{jiangy, xiast\}@sz.tsinghua.edu.cn}
\begin{document}
%
\maketitle
\begin{abstract}
Adversarial defense is a popular and important research area. Due to its intrinsic mechanism, one of the most straightforward and effective ways of defending attacks is to analyze the property of loss surface in the input space. In this paper, we define the local flatness of the loss surface as the maximum value of the chosen norm of the gradient regarding to the input within a neighborhood centered on the benign sample, and discuss the relationship between the local flatness and adversarial vulnerability. Based on the analysis, we propose a novel defense approach via regularizing the local flatness, dubbed local flatness regularization (LFR). We also demonstrate the effectiveness of the proposed method from other perspectives, such as human visual mechanism, and analyze the relationship between LFR and other related methods theoretically. Experiments are conducted to verify our theory and demonstrate the superiority of the proposed method.
\end{abstract}
\begin{keywords}
adversarial defense, loss surface geometry, gradient-based regularization.
\end{keywords}

\section{Introduction}
Deep neural networks (DNNs) have been successfully and  widely used in many computer vision areas, such as pose estimation \cite{toshev2014,alp2018}, object detection \cite{redmon2016,hu2018} and super-resolution \cite{kim2016,lin2018}. Despite their excellent performance under the standard setting, recently, researchers found that DNNs are vulnerable to some well-designed pixel-wise perturbations. Those perturbations are invisible to human, whereas they are able to fool the network with high probability. For example, some attack methods, such as fast gradient sign method (FGSM) \cite{goodfellow2014} and project gradient descent (PGD) \cite{madry2017towards} can reduce the accuracy of the network to almost $0\%$ on CIFAR-10 dataset.  


To reduce the adversarial vulnerability, some adversarial defense methods are proposed. They can be roughly divided into three categories, $i.e.$, adversarial training based defense \cite{madry2017towards,zhang2019}, detection based defense \cite{metzen2017,liu2019a}, and reconstruction based defense \cite{samangouei2018}. 
Among these methods, one of the most straightforward ways is to analyze the loss surface with respect to the input, since the adversarial attack is to find the worst-case perturbation in the input space. Specifically, in \cite{pmlr-v97-simon-gabriel19a}, the relationship between the average value of the chosen norm of the gradient regarding to input space and the adversarial vulnerability was discussed, based on which the authors proposed a defense method by regularizing that value. 
Recently, Qin et al. \cite{qin2019adversarial} pointed out that the previous method did not consider the local characteristics, and it could have a relatively poor performance when the loss surface was not linear enough in the local area. 
Instead, they proposed to achieve the local linearity based on regularizing the difference between the loss and its first-order Taylor expansion.

Is the local linearity really necessary for the adversarial defense? In this paper, we make a systematic discussion on the relationship between adversarial robustness and the local flatness of loss surface. To be more precise, the local flatness can be measured by the maximum value of the chosen norm of the gradient with respect to the input within a neighborhood centered on the benign sample. We prove that whether a sample is easy to be attacked is related to the flatness of loss surface around that sample. Based on this discussion, we propose a novel gradient-based regularization, the local flatness regularization (LFR) to enhance the adversarial robustness. Besides, we compare our method with the human visual mechanism and the local Lipschitz property to further verify the validity of the method. And discussion on the relationship between our LFR and other previous related defense methods is theoretically conducted, which demonstrates that most of them are special cases of LFR under certain conditions.

The main contributions of this paper can be summarized as follows:

\begin{itemize}
    \item We give a systematic discussion on the relationship between the local flatness of loss surface and the adversarial robustness. Based on the analysis, we propose a new regularization, the LFR, for the adversarial defense.
    \item We theoretically discuss the relationship between LFR and previous related defense methods.
    \item Experiments and comparison with the human visual mechanism and the local Lipschitz property are included, which further verify the validity of the proposed method.
\end{itemize}

\section{Local Flatness Regularization}

\subsection{Preliminaries}
Suppose $L(\cdot)$ is the loss function (such as cross-entropy or the K-L divergence), and $B_p(\bm{x},\epsilon)$ is the $\epsilon$-ball centering around $\bm{x}$ under $\ell^p$ norm, $i.e.$, $B_p(\bm{x},\epsilon)= \{\bm{x}'|\  ||\bm{x}'-\bm{x}||_p \leq \epsilon \}$. 

In this paper, we focus on the defense of $\ell^\infty$ attack. This defense is representative, since $B_p(\bm{x},\epsilon) \subset B_\infty(\bm{x},\epsilon), (\forall p \geq 0)$, and therefore the adversarial robustness under $\ell^{\infty}$ norm indicates the adversarial robustness under any another norm.

\begin{defn}
The local flatness of the loss surface (generated by classifier $C$) around $B_{\infty}(\bm{x},\epsilon)$ is defined as 
\begin{equation}
    \gamma_C(\bm{x},\epsilon) = \max_{\bm{x}' \in B_{\infty}(\bm{x},\epsilon)} ||\partial_{\bm{x}}L(\bm{x}')||_1.
\end{equation}
\end{defn}


The reason why $\gamma_C(\bm{x},\epsilon)$ measures the local flatness can be easily explained. By the definition, gradient (at point $\bm{x}'$) indicates the direction in which the loss is changed at the highest rate. As such, $||\partial_{\bm{x}}L(\bm{x}')||_1$ can be regarded as the fluctuation at $\bm{x}'$, and therefore $\max_{\bm{x}' \in B_{\infty}(\bm{x},\epsilon)} ||\partial_{\bm{x}}L(\bm{x}')||_1$ measures the flatness of local $B_{\infty}(\bm{x},\epsilon)$. 

In the following part, we analyze the relationship between the local flatness and adversarial robustness theoretically

\begin{theorem}\label{1}
$\forall \bm{x}' \in B_\infty(\bm{x},\epsilon), L( \bm{x}')\leq L(\bm{x}) + \epsilon \cdot \gamma_C(\bm{x},\epsilon)$.
\end{theorem}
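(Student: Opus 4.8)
The plan is to reduce the multivariate statement to a one-dimensional integral along the straight line connecting $\bm{x}$ and $\bm{x}'$, and then to control that integral using the duality between the $\ell^1$ and $\ell^\infty$ norms. First I would fix an arbitrary $\bm{x}' \in B_\infty(\bm{x},\epsilon)$ and parametrize the segment by $\bm{x}(t) = \bm{x} + t(\bm{x}'-\bm{x})$ for $t \in [0,1]$. The crucial structural observation is that $B_\infty(\bm{x},\epsilon)$ is convex, so every point $\bm{x}(t)$ of this segment again lies in $B_\infty(\bm{x},\epsilon)$; this is precisely what lets the global quantity $\gamma_C(\bm{x},\epsilon)$ serve as a uniform bound along the whole path.

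Next I would invoke the fundamental theorem of calculus to write
\begin{equation}
L(\bm{x}') - L(\bm{x}) = \int_0^1 \langle \partial_{\bm{x}}L(\bm{x}(t)),\, \bm{x}'-\bm{x}\rangle \, dt,
\end{equation}
which only requires $L$ to be differentiable (or at least absolutely continuous) along the segment, a mild regularity assumption satisfied by the smooth losses considered here. The integrand is an inner product, and the key inequality is H\"older's inequality applied with the dual exponents $1$ and $\infty$: for each $t$ one has $\langle \partial_{\bm{x}}L(\bm{x}(t)),\, \bm{x}'-\bm{x}\rangle \le \|\partial_{\bm{x}}L(\bm{x}(t))\|_1 \, \|\bm{x}'-\bm{x}\|_\infty$.

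It then remains to bound the two factors separately. Since $\bm{x}(t) \in B_\infty(\bm{x},\epsilon)$ for all $t$, the definition of the local flatness immediately gives $\|\partial_{\bm{x}}L(\bm{x}(t))\|_1 \le \gamma_C(\bm{x},\epsilon)$; and since $\bm{x}' \in B_\infty(\bm{x},\epsilon)$, we have $\|\bm{x}'-\bm{x}\|_\infty \le \epsilon$. Substituting these two estimates into the integral, and noting the resulting bound $\epsilon \cdot \gamma_C(\bm{x},\epsilon)$ is constant in $t$, yields $L(\bm{x}') - L(\bm{x}) \le \int_0^1 \epsilon\, \gamma_C(\bm{x},\epsilon)\, dt = \epsilon \cdot \gamma_C(\bm{x},\epsilon)$, which is exactly the claim.

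I expect the main obstacle to be conceptual rather than computational: one must correctly pair the $\ell^1$ norm appearing in the definition of $\gamma_C$ with the $\ell^\infty$ geometry of the perturbation ball through H\"older duality, since any other pairing would either weaken the bound or fail outright. A secondary point worth stating explicitly is the convexity of $B_\infty(\bm{x},\epsilon)$, which guarantees the entire integration path stays inside the ball so that the maximum defining $\gamma_C(\bm{x},\epsilon)$ can be applied pointwise at each $\bm{x}(t)$; without this observation the uniform gradient bound could not be justified.
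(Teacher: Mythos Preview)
Your proposal is correct and follows essentially the same approach as the paper: parametrize the segment from $\bm{x}$ to $\bm{x}'$, apply the fundamental theorem of calculus, observe by convexity that the whole segment lies in $B_\infty(\bm{x},\epsilon)$, and bound the inner product via the $\ell^1$--$\ell^\infty$ duality. Your write-up is in fact more explicit than the paper's, which compresses the H\"older step into a single inequality without naming it.
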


\begin{proof}
$\forall \bm{x}' \in B_\infty(\bm{x},\epsilon), \forall t \in [0,1]$, we have
$$
t\bm{x}'+(1-t)\bm{x} \in B_\infty(\bm{x},\epsilon).
$$
Therefore 

\begin{align*}
   L(\bm{x}^\prime) & = L(\bm{x}) + \int_{0}^{1} \frac{\partial   L(t\bm{x}^\prime +(1-t)\bm{x})}{\partial{t}} dt\\
             & = L(\bm{x}) + \int_{0}^{1} \frac{\partial L(t\bm{x}^\prime+(1-t)\bm{x})}{\partial{\bm{x}}} \cdot \frac{\partial{\bm{x}}}{\partial{t}} dt\\
            & = L(\bm{x}) + \int_{0}^{1} \nabla_{\bm{x}}L(t\bm{x}^\prime+(1-t)\bm{x}) \cdot (\bm{x}^\prime-\bm{x}) dt\\
          & \leq L(\bm{x}) + \epsilon \cdot \max_{\bm{x}^\prime \in B_\infty(\bm{x},\epsilon)} ||\partial_{\bm{x}}L(\bm{x}^\prime)||_{1}.
\end{align*}



\end{proof}

Theorem \ref{1} indicates that we can defend the attack under $\ell^\infty$ norm by regularizing the local flatness. 

Except for the previous perspective, its effectiveness can also be verified through the following aspects:

\newpage
\noindent \textbf{Verification from the aspect of human visual mechanism}
\vspace{0.3em}

It is widely accepted that the human visual system relies mainly on key components rather than all pixels of the whole image. For example, when categorizing a picture as a cat, the eye only pays attention to the pixels of the cat and ignores the background (such as the grass or the house). 

Considering the gradient of the loss function with respect  to  each  pixel  of  the  image $\bm{x}$, $i.e.$, the $\partial_{\bm{x}}L(\bm{x})$. The (absolute value of) gradient of the loss function at the pixel measures how important the pixel is to the prediction. Although there is no well-developed metric to identify which pixels are important to human vision system, using only key pixels at least means that the $\partial_{\bm{x}}L(\bm{x}) $ should be sparse, which can be constrained by the $\ell^1$ norm. This is also consistent with the phenomenon that the salience map is significantly more human-aligned for adversarially trained networks, as observed in \cite{nobug}. In addition, since the human visual system is not sensitive to small pixel-wise changes, $i.e.$ it is robust under certain pixel-wise perturbations, taking the local property of the gradients into consideration is rational.

\vspace{0.5em}
\noindent \textbf{Verification from the aspect of local Lipschitz property}

\begin{lemma}\label{Lip}
   Let $L(\cdot)$ be a Lipschitz continuous function, then $\max_{\bm{x}' \in B_\infty(\bm{x},\epsilon)} ||\partial_{\bm{x}}L(\bm{x}')|| \leq Lip(L)$, where $Lip(L)$ is the Lipschitz constant of loss $L$ in the local $B_\infty(\bm{x},\epsilon)$.  \rm{\cite{evans2015}}
\end{lemma}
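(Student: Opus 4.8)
The plan is to reduce the claim to the standard duality between the Lipschitz constant of $L$ and the dual norm of its gradient, which is precisely the content of the cited result in \cite{evans2015}. Throughout I read the unsubscripted norm on $\partial_{\bm{x}}L(\bm{x}')$ as the $\ell^1$ norm appearing in Definition~1, consistent with the fact that its dual is the $\ell^\infty$ norm that governs the perturbation ball $B_\infty(\bm{x},\epsilon)$; the constant $Lip(L)$ is correspondingly taken with respect to $\ell^\infty$ on $B_\infty(\bm{x},\epsilon)$.

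First I would fix an arbitrary point $\bm{x}' \in B_\infty(\bm{x},\epsilon)$ at which $L$ is differentiable, together with a direction $\bm{v}$ satisfying $\|\bm{v}\|_\infty \le 1$. For small $h$ the point $\bm{x}'+h\bm{v}$ stays in a neighborhood on which $L$ is Lipschitz, so the Lipschitz inequality gives
$$|L(\bm{x}'+h\bm{v}) - L(\bm{x}')| \le Lip(L)\,\|h\bm{v}\|_\infty \le Lip(L)\,|h|.$$
Dividing by $|h|$ and letting $h \to 0$ yields the directional-derivative bound $|\nabla_{\bm{x}} L(\bm{x}') \cdot \bm{v}| \le Lip(L)$. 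Next I would take the supremum over all directions with $\|\bm{v}\|_\infty \le 1$: by the $\ell^1$--$\ell^\infty$ duality we have $\sup_{\|\bm{v}\|_\infty \le 1} \nabla_{\bm{x}} L(\bm{x}') \cdot \bm{v} = \|\nabla_{\bm{x}} L(\bm{x}')\|_1$, so the previous step gives $\|\partial_{\bm{x}} L(\bm{x}')\|_1 \le Lip(L)$ at every differentiable $\bm{x}'$. Since this bound is uniform in $\bm{x}'$, taking the maximum over $\bm{x}' \in B_\infty(\bm{x},\epsilon)$ preserves it and delivers the claim.

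The main technical obstacle is that a merely Lipschitz $L$ need not be differentiable everywhere, so $\partial_{\bm{x}} L(\bm{x}')$ and hence the maximum in the statement are a priori defined only almost everywhere. I would resolve this by invoking Rademacher's theorem (the source of the cited result), which guarantees differentiability a.e.\ on $B_\infty(\bm{x},\epsilon)$, and by reading the maximum as an essential supremum over the differentiable points; the uniform bound above then passes to that supremum unchanged. If instead $L$ is assumed continuously differentiable, as is effectively the case for the smooth losses considered here, this subtlety disappears and the argument is exactly the two displayed estimates above.
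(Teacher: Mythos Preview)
Your argument is correct and is precisely the standard proof one finds in the cited reference: bound the difference quotient by the Lipschitz constant, pass to the limit to bound directional derivatives, and take the supremum over unit $\ell^\infty$ directions to recover the $\ell^1$ norm of the gradient via duality. The paper itself does not supply a proof of this lemma at all; it merely states the inequality and cites \cite{evans2015}, so there is no in-paper argument to compare against beyond noting that your proposal reconstructs the textbook proof (including the appropriate Rademacher caveat) that the citation points to.
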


Lemma \ref{Lip} indicates that regularizing $\gamma(\bm{x},\epsilon)$ has a direct connection with regularizing $Lip(L)$, which is effective for the defense since $\forall \bm{x}' \in B_\infty(\bm{x},\epsilon),$

$$
||L(\bm{x}') - L(\bm{x})|| \leq ||\bm{x}' - \bm{x}||\cdot Lip(L) \leq \epsilon \cdot Lip(L), 
$$
according to the definition of Lipschitz property.

\subsection{Proposed method}
Following the analysis above, we propose a novel method to defend $\ell^\infty$ attack as follows:
\begin{equation}\label{eq2}
    \underset{\boldsymbol{\theta}}{\min}\ \mathbb{E}_{(\bm{x},y) \in \mathcal{D}} \{L_{normal}(\bm{x},y) + \lambda \cdot \underbrace{\max_{\bm{x}' \in B_\infty(\bm{x},\epsilon)} ||\partial_{\bm{x}}L(\bm{x}')||_1}_{LFR} \},
\end{equation}
where $ L_{normal}(\cdot)$ is the normal training loss (such as cross-entropy or K-L divergence), and $\lambda$ is a non-negative hyperparamter.  

The minimax problem (\ref{eq2}) can be solved by alternatively solving the inner-maximization and the outer-minimization sub-problems as follows: 
\begin{itemize}
\item {\bf Inner-maximization}: given model parameters $\boldsymbol{\theta}$, for each $\bm{x} \in \mathcal{D}$, we generate an adversarial example $\bm{x}'$ by
\begin{flalign}\label{eq_max}
\bm{x'} \leftarrow \underset{\bm{x}' \in B_{\infty}(\bm{x},\epsilon)}{\arg\max}||\partial_{\bm{x}}L(\bm{x}';\boldsymbol{\theta})||_1. 
\end{flalign}
Equation (\ref{eq_max}) could be solved by different adversarial attack methods, such as PGD \cite{madry2017towards}. 
\item {\bf Outer-minimization}: given $\bm{x'}$, for each $\bm{x} \in \mathcal{D}$, the parameter $\boldsymbol{\theta}$ is updated by
\begin{flalign}
\boldsymbol{\theta} \leftarrow & ~
\underset{\boldsymbol{\theta}}{\arg\min}\ \mathbb{E}_{(\bm{x},y) \in \mathcal{D}} \{L(\bm{x},y;\boldsymbol{\theta}) +
\label{eq: outer-opt}
\\
& ~
 \lambda \cdot ||\partial_{\bm{x}}L(\bm{x}';\boldsymbol{\theta})||_1\}.
\nonumber
\end{flalign}
We update $\boldsymbol{\theta}$ using back-propagation \cite{rumelhart1985learning} with the stochastic gradient descent \cite{zhang2004solving}.
\end{itemize}



\section{Comparison with related methods}
In this section, we compare our method with some previous related defense methods under $\ell^\infty$ attack. Specifically, we prove that both adversarial training and first-order based adversarial defense are special cases of our method under certain conditions.

\subsection{Link to the first-order based defense}

\begin{defn}
To defend the attack under $\ell^\infty$ norm, first-order based adversarial defense is to minimize $L(\bm{x}) + \lambda ||\partial_{\bm{x}}L(\bm{x})||_1$. \rm{\cite{pmlr-v97-simon-gabriel19a}}
\end{defn}

\begin{theorem}\label{first}
First-order based adversarial defense is a special case of LFR with $\epsilon = 0$
\end{theorem}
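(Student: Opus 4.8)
The plan is to substitute $\epsilon = 0$ directly into the LFR objective (\ref{eq2}) and verify that the regularizer collapses exactly onto the first-order penalty of Definition 3. The whole argument hinges on a single structural observation about the constraint set $B_\infty(\bm{x},\epsilon)$ when $\epsilon = 0$, so I would organize the proof around identifying that set and then propagating the consequence through the objective.

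First I would compute $B_\infty(\bm{x},0)$ from its definition. Since $B_\infty(\bm{x},0) = \{\bm{x}'\mid ||\bm{x}'-\bm{x}||_\infty \leq 0\}$ and any norm is nonnegative, vanishing only at the zero vector, the constraint $||\bm{x}'-\bm{x}||_\infty \leq 0$ forces $\bm{x}'=\bm{x}$. Hence $B_\infty(\bm{x},0)=\{\bm{x}\}$ is a singleton. Next I would use that the maximum of any function over a one-point set equals its value at that point, so that
\[
\max_{\bm{x}' \in B_\infty(\bm{x},0)} ||\partial_{\bm{x}}L(\bm{x}')||_1 = ||\partial_{\bm{x}}L(\bm{x})||_1 .
\]
Plugging this into (\ref{eq2}) with $\epsilon=0$, the per-sample objective becomes $L_{normal}(\bm{x},y) + \lambda\,||\partial_{\bm{x}}L(\bm{x})||_1$, which, upon identifying the normal training loss $L_{normal}$ with the loss $L$ appearing in Definition 3, is precisely the first-order based adversarial defense objective.

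There is no genuine obstacle in this argument; it is essentially a definitional reduction. The only point deserving an explicit line of justification is the degeneration $B_\infty(\bm{x},0)=\{\bm{x}\}$, and the only bookkeeping concern is to confirm that the loss function in the two formulations is the same normal training objective, so that the reduction is literal rather than merely formal. I would therefore keep the proof short, stating the singleton fact, the collapse of the $\max$, and the resulting coincidence of the two objectives.
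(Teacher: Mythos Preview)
Your proposal is correct and follows exactly the paper's approach: the paper's proof is the single displayed identity $||\partial_{\bm{x}}L(\bm{x})||_{1} = \max_{\bm{x}' \in B_{\infty}(\bm{x},0)} ||\partial_{\bm{x}}L(\bm{x}')||_1$, and your argument simply unpacks the justification behind this equality. Your additional remarks about the singleton $B_\infty(\bm{x},0)=\{\bm{x}\}$ and the identification of $L_{normal}$ with $L$ are harmless elaborations of the same one-line reduction.
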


\begin{proof}
$$
    ||\partial_{\bm{x}}L(\bm{x})||_{1} = 
    \max_{\bm{x}' \in B_{\infty}(\bm{x},0)} ||\partial_{\bm{x}}L(\bm{x}')||_1.
$$
\end{proof}

\subsection{Link to the adversarial training}

\begin{lemma}\label{lemma1}
   $\max_{||\bm{\alpha}||_p \leq 1} \bm{\alpha}\bm{\beta} = ||\bm{\beta}||_q, (\frac{1}{p} + \frac{1}{q} =1).$ \rm{ \cite{boyd2004convex}}
\end{lemma}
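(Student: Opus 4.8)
The plan is to recognize this as the standard dual-norm characterization, namely that the $\ell^q$ norm is the dual of the $\ell^p$ norm when $\frac{1}{p}+\frac{1}{q}=1$, and to establish it by proving two matching inequalities. First I would dispose of the trivial case $\bm{\beta}=\bm{0}$, where both sides vanish, so that henceforth $||\bm{\beta}||_q>0$ and division by it is legitimate.

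For the upper bound I would invoke H\"older's inequality: for every feasible $\bm{\alpha}$ with $||\bm{\alpha}||_p\le 1$ we have $\bm{\alpha}\bm{\beta}\le |\bm{\alpha}\bm{\beta}|\le ||\bm{\alpha}||_p\,||\bm{\beta}||_q\le ||\bm{\beta}||_q$, and taking the supremum over the feasible set gives $\max_{||\bm{\alpha}||_p\le1}\bm{\alpha}\bm{\beta}\le ||\bm{\beta}||_q$. This direction is immediate and requires no further work.

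The substantive step is to exhibit a feasible $\bm{\alpha}^{\star}$ attaining the bound, which yields the reverse inequality. For $1<p<\infty$ I would set $\alpha_i^{\star}=\mathrm{sgn}(\beta_i)\,|\beta_i|^{q-1}/||\bm{\beta}||_q^{q-1}$. The key bookkeeping is the conjugacy identity: since $q=\frac{p}{p-1}$, one has $(q-1)p=q$, from which a short computation gives $||\bm{\alpha}^{\star}||_p^p=\sum_i|\beta_i|^q/||\bm{\beta}||_q^q=1$ (so $\bm{\alpha}^{\star}$ is feasible) and $\bm{\alpha}^{\star}\bm{\beta}=\sum_i|\beta_i|^q/||\bm{\beta}||_q^{q-1}=||\bm{\beta}||_q$. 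Combined with the upper bound, this settles the case $1<p<\infty$.

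Finally I would treat the boundary exponents separately, since the formula for $\bm{\alpha}^{\star}$ degenerates there (the exponent $q-1$ being either $0$ or ill-defined). For $p=\infty,\,q=1$ the choice $\alpha_i=\mathrm{sgn}(\beta_i)$ gives $||\bm{\alpha}||_\infty=1$ and $\bm{\alpha}\bm{\beta}=||\bm{\beta}||_1$; for $p=1,\,q=\infty$ the choice concentrating unit mass on a coordinate $j$ maximizing $|\beta_j|$, with the matching sign, gives $||\bm{\alpha}||_1=1$ and $\bm{\alpha}\bm{\beta}=||\bm{\beta}||_\infty$. The only real obstacle is the exponent algebra underlying the explicit maximizer and remembering to dispatch these endpoint cases; everything else is a direct application of H\"older's inequality.
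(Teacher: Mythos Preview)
Your proof is correct and is the standard dual-norm argument via H\"older's inequality together with an explicit maximizer. The paper, however, does not prove this lemma at all: it simply quotes the result from \cite{boyd2004convex} and uses it as a black box in the proof of Theorem~\ref{adv}. So there is no ``paper's proof'' to compare against; you have supplied a complete and correct justification where the authors chose to cite one instead.
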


\begin{theorem}\label{adv}
Adversarial training using $\epsilon$-scaled FGSM attack under $\ell^\infty$ norm is equivalent to minimizing $L(\bm{x}) + \epsilon \cdot  \max_{\bm{x}' \in B_\infty (\bm{x},0)} ||\partial_{\bm{x}}L(\bm{x}')||_1$ up to terms of order $\epsilon^2$.
\end{theorem}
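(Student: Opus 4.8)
The plan is to reduce the statement to a single first-order Taylor expansion. The first observation is that the regularizer on the right-hand side is evaluated over $B_\infty(\bm{x},0)$, which is the single point $\{\bm{x}\}$; hence $\max_{\bm{x}' \in B_\infty(\bm{x},0)} ||\partial_{\bm{x}}L(\bm{x}')||_1 = ||\partial_{\bm{x}}L(\bm{x})||_1$, and the target functional is just $L(\bm{x}) + \epsilon \cdot ||\partial_{\bm{x}}L(\bm{x})||_1$ (which, incidentally, is exactly the first-order based defense of Theorem \ref{first} with $\lambda = \epsilon$). So it suffices to show that the adversarially-trained loss $L(\bm{x}_{adv})$, where $\bm{x}_{adv}$ is the $\epsilon$-scaled FGSM example, agrees with this expression up to $O(\epsilon^2)$; since the two objectives differ only by an $O(\epsilon^2)$ term, minimizing one is equivalent to minimizing the other to the claimed order.

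First I would write the FGSM perturbation explicitly as $\bm{\delta} = \epsilon \cdot \mathrm{sign}(\partial_{\bm{x}}L(\bm{x}))$, so that $\bm{x}_{adv} = \bm{x} + \bm{\delta}$ with $||\bm{\delta}||_\infty = \epsilon$, and then Taylor-expand $L$ about $\bm{x}$:
$$
L(\bm{x} + \bm{\delta}) = L(\bm{x}) + \partial_{\bm{x}}L(\bm{x}) \cdot \bm{\delta} + O(||\bm{\delta}||^2).
$$
The next step is to evaluate the linear term, and this is where Lemma \ref{lemma1} enters with $p = \infty$, $q = 1$: the maximizer of $\bm{\alpha} \cdot \partial_{\bm{x}}L(\bm{x})$ over $||\bm{\alpha}||_\infty \leq 1$ is precisely $\mathrm{sign}(\partial_{\bm{x}}L(\bm{x}))$, with maximal value $||\partial_{\bm{x}}L(\bm{x})||_1$. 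Hence $\partial_{\bm{x}}L(\bm{x}) \cdot \bm{\delta} = \epsilon \cdot ||\partial_{\bm{x}}L(\bm{x})||_1$; equivalently, one checks componentwise that each term equals $\epsilon\,|[\partial_{\bm{x}}L(\bm{x})]_i|$ and sums. Finally, since $||\bm{\delta}||^2 = \epsilon^2 d$ with $d$ the input dimension, the remainder is $O(\epsilon^2)$, yielding $L(\bm{x}_{adv}) = L(\bm{x}) + \epsilon \cdot ||\partial_{\bm{x}}L(\bm{x})||_1 + O(\epsilon^2)$, which is the desired conclusion.

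This argument is essentially routine, so I do not expect a genuine obstacle; the only real subtlety is recognizing that the FGSM sign direction is exactly the constrained maximizer supplied by Lemma \ref{lemma1} — without that identification the linear term reads as an arbitrary inner product rather than precisely the $\ell^1$ norm, and the clean equivalence would be obscured. A minor point I would state explicitly is the regularity hypothesis (twice differentiability, or at least a bounded Hessian, of $L$ in a neighborhood of $\bm{x}$) needed to justify the $O(\epsilon^2)$ control of the Taylor remainder.
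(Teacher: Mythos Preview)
Your proposal is correct and follows essentially the same route as the paper: a first-order Taylor expansion of $L$ about $\bm{x}$, combined with Lemma~\ref{lemma1} (the dual-norm identity with $p=\infty$, $q=1$) to identify the linear term as $\epsilon\,||\partial_{\bm{x}}L(\bm{x})||_1$, and then the observation that $B_\infty(\bm{x},0)=\{\bm{x}\}$ to match the stated regularizer. The only cosmetic difference is that you plug in the explicit FGSM direction $\bm{\delta}=\epsilon\,\mathrm{sign}(\partial_{\bm{x}}L(\bm{x}))$ and evaluate the inner product, whereas the paper writes $\max_{||\bm{\alpha}||_\infty\le\epsilon}\bm{\alpha}\cdot\partial_{\bm{x}}L(\bm{x})$ and invokes Lemma~\ref{lemma1} directly; these are two phrasings of the same step.
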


\begin{proof}
According to the first-order Taylor expansion, 
\begin{equation}
L(\bm{x}+ \bm{\alpha}) = L(\bm{x}) + \bm{\alpha} \cdot \partial_{\bm{x}}L(\bm{x}) + \mathcal{O}(\bm{\alpha}^2).
\end{equation}

Therefore the optimal adversarial training using $\epsilon$-scaled FGSM attack can be solved approximately up to terms of order $\epsilon^2$ by
\begin{equation}\label{eq4}
    \max_{||\bm{\alpha}||_{\infty} \leq \epsilon} L(\bm{x}) + \bm{\alpha}\cdot \partial_{\bm{x}}L(\bm{x}) = L(\bm{x}) + \max_{||\bm{\alpha}/{\epsilon}||_{\infty} \leq 1}\epsilon \cdot \frac{\bm{\alpha}}{\epsilon} \partial_{\bm{x}}L(\bm{x}).
\end{equation}

According to Lemma \ref{lemma1}, 
\begin{equation}
    (\ref{eq4}) = L(\bm{x}) + \epsilon \cdot \  ||\partial_{\bm{x}}L(\bm{x})||_1.
\end{equation}

In other words, adversarial training is a special case of first-order defense to a certain extent. By Theorem \ref{first}, the statement is proved.
\end{proof}

\begin{figure*}[!htb]
\vskip -0.13in
\centering
\subfigure[]{
\label{fig1a}
\includegraphics[width=0.49\textwidth]{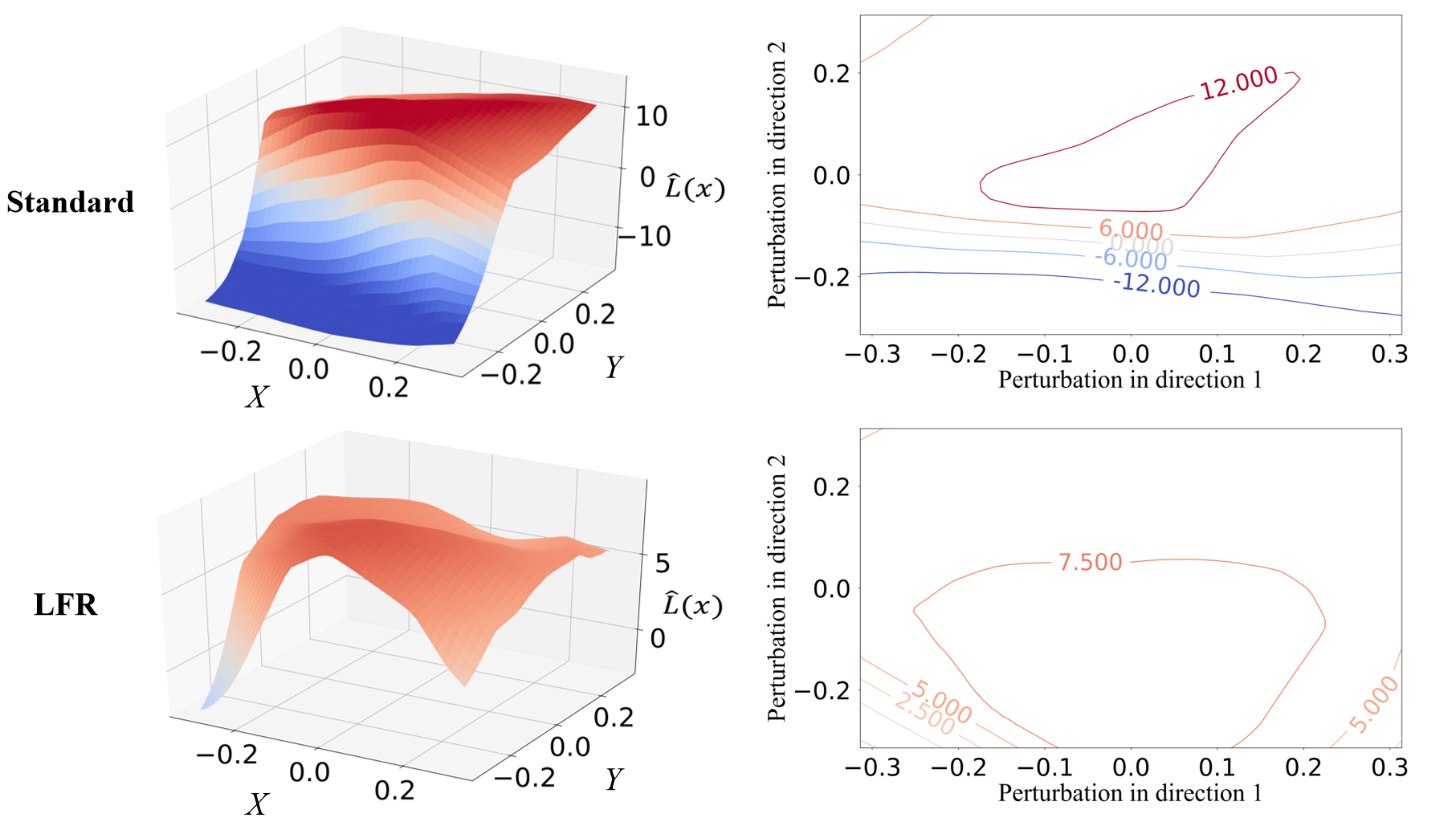}}
\subfigure[]{
\label{fig1b}
\includegraphics[width=0.49\textwidth]{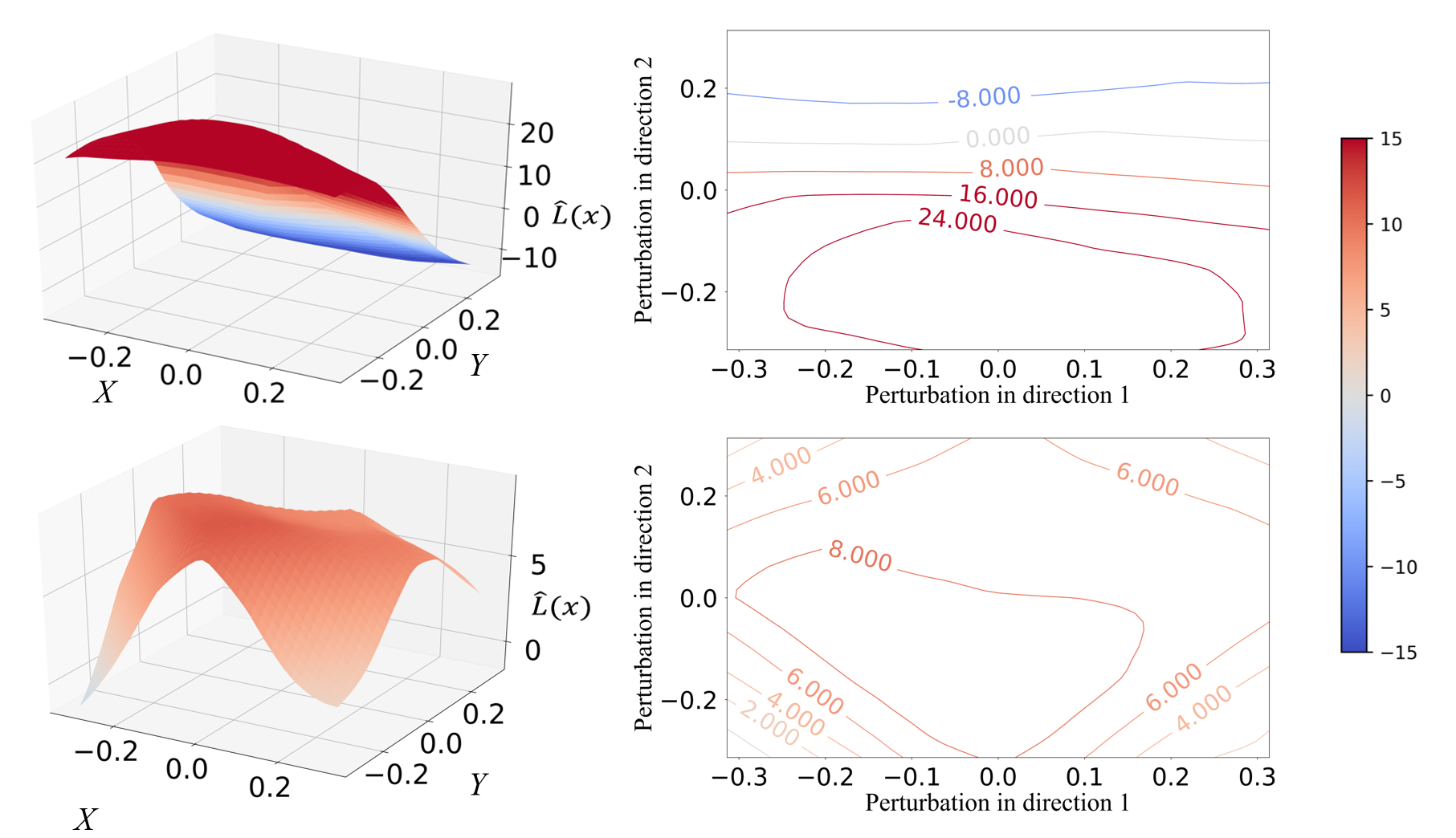}}
\vskip -0.13in
\caption{Comparison between the decision surfaces of the Standard and LFR models on two randomly selected samples in the MNIST dataset. First row: the 3D decision surfaces and their corresponding 2D version of the Standard model. Second row: the decision surfaces of the LFR model. In the 3D version, X and Y axis represent two different perturbation directions with the value indicating the perturbation size. Z-axis indicates the decision value. The prediction is correct if and only if the decision value is positive, which is represented as red areas. 
} 

\label{caption}
\vskip -0.1in
\end{figure*}

\section{Experiments}

\subsection{Adversarial defense under white-box attacks}

\noindent \textbf{Baselines Selection.}
We select trade-off inspired adversarial defense via surrogate-loss minimization (TRADES) \cite{zhang2019}, local linearization regularization (LLR) \cite{qin2019adversarial} and PGD-based adversarial training (AT) \cite{madry2017towards} as the baseline methods in the following experiments, since they are the representatives of the state-of-the-art defense methods, the most advanced loss surface geometry based defenses, and the most classical defense methods, respectively. Besides, we also train a model with the standard training process, dubbed ``Standard".

\noindent \textbf{Training Setup.} We conduct the experiments in the MNIST dataset \cite{mnist}, and adopt a simple CNN architecture for training. The simple CNN consists of four convolutional layers followed by three fully-connected layers. Specifically, we set the perturbation $\epsilon_{1}  = 0.3$, the perturbation step size $\eta_{1} = 0.01$, number of iterations $K_{1} = 40$ in inner maximization problem, and run 100 epochs with learning rate $\alpha_{1} = 0.01$ and batch size $m_{1}=128$. These settings are learned from \cite{zhang2019}. The hyperparamter $\lambda$ of the proposed LFR is set to 0.02. For the hyperparameters of other defenses, we set $1/\lambda=1$ for TRADES, $\lambda=4$ and $\mu=3$ for LLR according to the setting suggested in their papers.

\noindent \textbf{Attack Setup.} We evaluate the adversarial robustness of different methods under fast gradient sign method (FGSM) \cite{goodfellow2014}, project gradient descent (PGD) \cite{madry2017towards}, momentum iterative fast gradient sign method (MI-FGSM) \cite{MI-FGSM} and decoupled direction and norm attack (DDNA) \cite{DDN}. The attack setting we apply here is as follows: iteration $K_{t} = 40$, step size $\eta _{t} = 0.01$ and the maximum perturbation $\epsilon _{t} = 0.3$.

As shown in the Table \ref{tab:new_loss}, the accuracy of the Standard model drops drastically under these attacks whereas LFR still achieves valid performance. When compared with other baselines, LFR performs best under all attacks and obtains great improvements. Under stronger attacks $i.e.$ the $\text{PGD}^{40}$ and $\text{MI-FGSM}$, LFR outperforms the SOTA baselines by even larger margins.

\begin{table}[H]
\small
\vskip -0.13in
\centering
\caption{Robustness evaluation under various attacks.}
\scalebox{0.8}{
\begin{tabular}{c|c|cccc}
\toprule
\multirow{2}*{Defense}&\multirow{2}*{Clean}&\multicolumn{4}{c}{Attack Type}\\
\cline{3-6}
& & $\text{FGSM}$&$\text{PGD}^{40}$&$\text{MI-FGSM}$&$\text{DDNA}$\\
\midrule
Standard &$99.30\%$ &$33.69\%$ &$2.04\%$& $2.55\%$& $14.16\%$\\
AT &$99.52\%$ &$97.26\%$ &$95.37\%$&$94.17\%$&$94.05\%$\\
TRADES & $99.49\%$ & $97.54\%$ & $95.71\%$& $94.79\%$&$95.91\%$\\
LLR&$99.62\%$ &$97.94\%$& $95.63\%$&$94.60\%$&$93.95\%$\\
LFR& $99.47\%$&$\bf{98.14\%}$& $\bf{96.82\%}$& $\bf{96.01\%}$& $\bf{96.89\%}$\\
\bottomrule
\end{tabular}
}
\label{tab:new_loss}
\vskip -0.13in
\end{table}

\subsection{Visualization of decision surfaces}

\begin{figure}[htp]
    \centering
    \includegraphics[width=8cm]{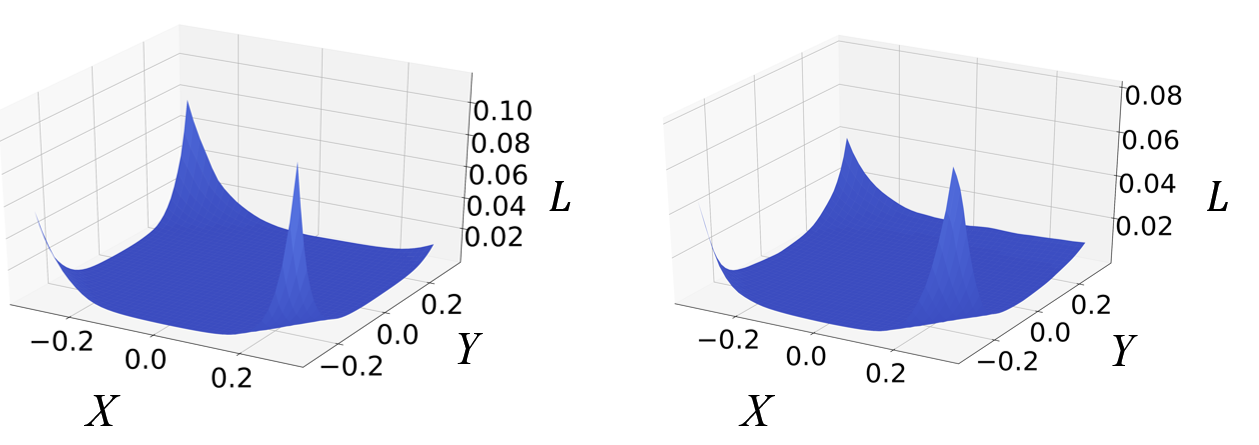}
    \caption{Loss surfaces of the LFR model. X and Y represent perturbation size along two directions. Z-axis indicates the value of loss function. }
    \label{loss_1}
\end{figure}

In this section, we analyze the effectiveness of LFR from the geometrical property of the decision surface. In particular, we visualize the decision surfaces of chosen models by using the method proposed in \cite{Yu2019Interpreting}. The decision value $\hat L(\bm{x})$ of a sample $\bm{x}$ is defined as $\hat L(\bm{x}) = p_{y} - \max_{i\neq{y}}p_{i}$, where $p_{i}$ is the logit value of label $i$, and $y$ is the ground-truth label. Hence, this value can evaluate the decision confidence with $\hat L(\bm{x})>0$ indicating that the prediction of $\bm{x}$ is correct and vice versa. We visualize the decision surfaces of the proposed LFR ($\lambda=0.02$) and another model with standard training, i.e. the Standard model ($\lambda=0$) on two randomly selected samples in MNIST dataset, as shown in Fig.\ref{caption}.

As shown in Fig.\ref{caption}, the decision surfaces between these two models behave quite differently. Compared with LFR, the decision surfaces of the Standard model have sharper peaks and larger slopes, implying that the decision is vulnerable to small perturbations. In other words, the decision confidence can quickly drop to negative areas when the model is fooled after being attacked by small pixel-wise adversarial perturbations. In contrast, the decision surfaces of LFR are rather flat and locate on a plateau with positive decision confidence in the vicinity of the sample. As such, the outputs of LFR still lie in the correct classification regions after being attacked.

We also visualize the loss surfaces $L(\bm{x})$ of the LFR model. Results on two randomly selected samples are shown in Fig. \ref{loss_1}. The loss surfaces are flat rather than linear, while the corresponding model is still robust. Therefore, it is the local flatness rather than the local linearity that is critical to adversarial defense.

\subsection{The effect of hyperparameter}
In this section, we further analyze how $\lambda$ could affect the performance.

\begin{figure}[htp]
    \centering
    \vskip -0.1in
    \includegraphics[width=7cm]{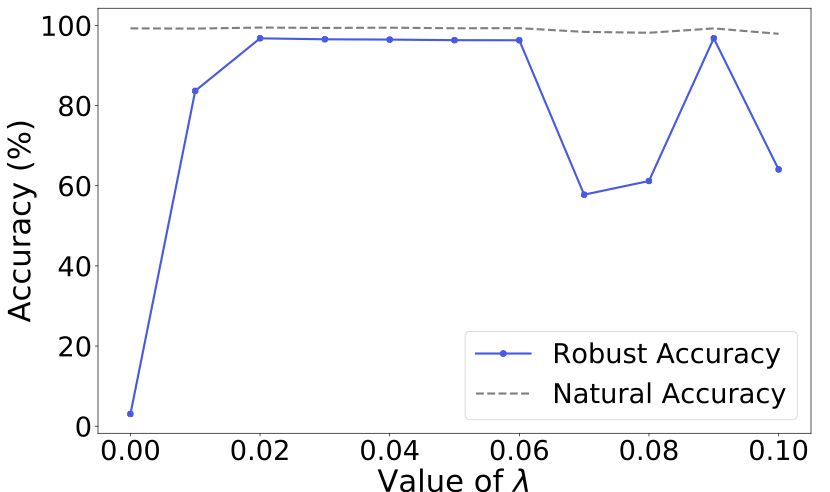}
    \caption{Clean and adversarial accuracy of the LFR model w.r.t different $\lambda$. }
    \label{fig:my_label}
    \vskip -0.1in
\end{figure}
\label{sec:pagestyle}

As shown in the Fig. \ref{fig:my_label}, the improvement of adversarial robustness led by LFR is significant, especially when the $\lambda$ is well-selected.

\section{Conclusions}
In this paper, we propose a new gradient-based regularization, the local flatness regularization (LFR), based on the relationship between the adversarial vulnerability and the local flatness of loss surface. The local flatness is defined as the maximum value of the chosen norm of the gradient regarding to the input within a neighborhood centered on the benign sample in the paper. We theoretically discuss the relationship between LFR with previous related defense methods, and further verify the effectiveness from both the aspect of human visual mechanism and local Lipschitz property. Verification experiments are conducted, which demonstrates the superiority of the proposed method.

\bibliographystyle{IEEEbib}
\bibliography{strings,refs}

\end{document}